\documentclass[11pt]{article}
\usepackage[preprint]{acl}

\usepackage[T1]{fontenc}
\usepackage[utf8]{inputenc}
\usepackage{times}
\usepackage{latexsym}
\usepackage{microtype}
\usepackage{booktabs}
\usepackage{amsmath,amssymb,amsthm}
\usepackage{graphicx}
\usepackage{tikz}
\usepackage{pgfplots}
\pgfplotsset{compat=1.18}
\usepackage{xcolor}
\usepackage{url}
\usepackage{seqsplit}

\usepackage{iftex}

\newif\ifdevanagariglyphs
\ifXeTeX
  \usepackage{fontspec}
  \IfFontExistsTF{Noto Sans Devanagari}
    {\newfontfamily\devfont{Noto Sans Devanagari}\devanagariglyphstrue}
    {\IfFontExistsTF{Devanagari MT}
      {\newfontfamily\devfont{Devanagari MT}\devanagariglyphstrue}
      {\IfFontExistsTF{Kohinoor Devanagari}
        {\newfontfamily\devfont{Kohinoor Devanagari}\devanagariglyphstrue}
        {\devanagariglyphsfalse}}}
\else
  \devanagariglyphsfalse
\fi

\ifdevanagariglyphs
  \newcommand{\dev}[2]{{\devfont #1}}
  \newcommand{\devrom}[2]{{\devfont #1} (\textit{#2})}
\else
  \newcommand{\dev}[2]{\textit{#2}}
  \newcommand{\devrom}[2]{\textit{#2}}
\fi

\newcommand{\dvdigits}{\dev{०-९}{U+0966--U+096F}}

\newcommand{\ArkiosVocab}{65,536}
\newcommand{\ArkiosNeFert}{1.69}

\newcommand{\QwenNeFert}{6.57}
\newcommand{\QwenVocab}{151,665}

\newcommand{\BloomNeFert}{1.72}
\newcommand{\BloomVocab}{250,680}
\newcommand{\IndicBertNeFert}{1.58}
\newcommand{\IndicBertVocab}{250,000}

\newcommand{\NeBoundControl}{4.67}
\newcommand{\NeBoundTreatment}{1.14}
\newcommand{\NeShatter}{4.09}
\newcommand{\EnShatter}{1.00}
\newcommand{\NumAbugida}{17}

\newcommand{\NumLanguages}{26}
\newcommand{\MaxShatterLang}{Thai}
\newcommand{\MaxShatter}{9.02}
\newcommand{\MinAbugidaShatter}{1.47}
\newcommand{\MinAbugidaShatterLang}{Tibetan}

\newcommand{\PairControlNe}{4.78}
\newcommand{\PairTreatmentNe}{1.58}
\newcommand{\PairSpeedup}{3.03}
\newcommand{\PairControlEn}{1.276}
\newcommand{\PairTreatmentEn}{1.303}
\newcommand{\PairEnCostPct}{2.1}
\newcommand{\PairControlVsBound}{2.2}
\newcommand{\PairVocab}{65,536}
\newcommand{\PairCorpusMB}{2000}
\newcommand{\PairNepaliFrac}{50\%}

\newcommand{\SweepControlSpreadPct}{1.7}

\newcommand{\SweepTreatmentSpreadPct}{33.9}

\newcommand{\SweepControlMaxAboveBoundPct}{3.6}
\newcommand{\SweepEnGapLowPct}{0.25}
\newcommand{\SweepEnGapHighPct}{11.0}
\newcommand{\SweepPoints}{7}
\newcommand{\SweepLow}{5\%}
\newcommand{\SweepHigh}{95\%}
\newcommand{\VocalArabic}{6.50}
\newcommand{\VocalHebrew}{6.14}
\newcommand{\LettersOnlyBestNe}{3.76}

\newcommand{\MarkAwareWorstNe}{4.29}

\newcommand{\FloresSha}{b8b0b7678302}
\newcommand{\HarnessCommit}{156f4527}

\newcommand{\AblANeBpb}{0.4080}

\newcommand{\AblBNeBpb}{0.3899}

\newcommand{\AblCNeBpb}{0.3925}

\newcommand{\AblBvsANe}{4.43}

\newcommand{\AblBvsAEn}{1.84}

\newcommand{\AblBvsCNe}{0.65}

\newcommand{\AblBvsCEn}{1.31}

\newcommand{\AblComputeRatio}{1.59}
\newcommand{\AblComputeFrac}{63}
\newcommand{\AblVerdict}{beyond compression alone}

\newcommand{\MpN}{5}
\newcommand{\MpVocab}{65,536}
\newcommand{\MpCorpusMB}{2,000}
\newcommand{\MpLangs}{Hindi, Bengali, Nepali, Tamil and Malayalam}
\newcommand{\MpMaxBoundGap}{2.2}
\newcommand{\MpMaxBoundGapLang}{Nepali}
\newcommand{\MpMeanBoundGap}{1.8}
\newcommand{\MpMinRatio}{2.54}
\newcommand{\MpMinRatioLang}{Hindi}
\newcommand{\MpMaxRatio}{3.86}
\newcommand{\MpMaxRatioLang}{Tamil}
\newcommand{\MpBoundSpread}{2.2}
\newcommand{\MpTreatAboveBoundMin}{18}
\newcommand{\MpTreatAboveBoundMax}{67}
\newcommand{\MpEnCostMin}{1.6}
\newcommand{\MpEnCostMax}{2.4}

\newcommand{\CensusLimit}{2,000}
\newcommand{\CensusRows}{3,479}
\newcommand{\CensusTgN}{1,345}
\newcommand{\CensusTgTok}{412}
\newcommand{\CensusTgLettersModels}{63.3}
\newcommand{\CensusTgLettersTok}{52.9}
\newcommand{\CensusTgLettersDl}{72.5}

\newcommand{\CensusExcludedNoTok}{797}

\newcommand{\code}[1]{\texttt{#1}}
\newcommand{\pL}{\code{\textbackslash p\{L\}+}}
\newcommand{\pLM}{\code{[\textbackslash p\{L\}\textbackslash p\{M\}]+}}
\newcommand{\cat}[1]{\textsf{#1}}

\makeatletter
\newif\ifdeanon
\ifacl@anonymize\deanonfalse\else\deanontrue\fi
\makeatother

\ifdeanon\newcommand{\sysname}{Arkios}\else\newcommand{\sysname}{Ours}\fi

\ifdeanon
  \newcommand{\repourl}{https://github.com/sajalregmi/arkios-tokenizer}
  \newcommand{\hfurl}{https://huggingface.co/sajalregmi4/arkios-tokenizer}
  \newcommand{\anonurl}[1]{\url{#1}}
\else
  \newcommand{\repourl}{}
  \newcommand{\hfurl}{}
  \newcommand{\anonurl}[1]{\textit{(URL withheld for anonymous review)}}
\fi

\newtheorem{proposition}{Proposition}

\title{Vowel Signs Are Not Letters:\\
       A Pre-tokenization Ceiling on Multilingual Tokenizer Fertility}

\author{
  Sajal Regmi\thanks{Corresponding author and primary contributor.
  \texttt{sajalregmi@karelatechnologies.com}} \quad
  Siddhartha Pudasaini \quad
  Chetan Phakami Pun \\[4pt]
  \normalsize Karela Technologies Inc.
}
\date{}

\begin{document}
\maketitle

\begin{abstract}
Byte-level BPE tokenizers that use the HuggingFace \code{ByteLevel}
pre-tokenizer inherit GPT-2's word regex, where a word is defined as \pL{}, one
or more Unicode \emph{letters}. In abugida scripts, vowels are written as
combining \emph{marks}; this pattern therefore splits each word at every vowel
sign. Since BPE merges only within a pre-token, those splits persist through
training regardless of vocabulary size or corpus composition. We formalise this
effect as a training-free lower bound on fertility. Across \NumLanguages{}
languages from a parallel corpus, every one of the \NumAbugida{} abugidas is
affected, ranging from $\MinAbugidaShatter\times$ (\MinAbugidaShatterLang) to
$\MaxShatter\times$ (\MaxShatterLang), whereas Latin, Cyrillic, Hangul, and Han
show exactly $1.00\times$. For \MpN{} languages, matched tokenizer pairs that
differ only in this character class fall within \MpMaxBoundGap\% of the predicted
floor, scoring \PairControlNe{} versus \PairTreatmentNe{} tokens per word on
Nepali. When the Nepali share of the training corpus is swept from \SweepLow{}
to \SweepHigh{}, the broken tokenizer barely shifts at all
(\SweepControlSpreadPct\%) while the fixed one shifts \SweepTreatmentSpreadPct\%,
which separates a
structural ceiling from a data shortage without needing to inspect any code. We
train three 268M models that differ only in their tokenizer; the fixed variant
achieves \AblBvsANe\% lower held-out Nepali bits per byte at equal compute, and
it still leads when given the same bytes with $\AblComputeRatio\times$ the
compute. A census of \CensusRows{} HuggingFace repositories finds the
letters-only word class present in \CensusTgLettersModels\% of the
most-downloaded text-generation models, accounting for
\CensusTgLettersDl\% of their downloads. GPT-4o's \code{o200k} pattern already
uses a mark-aware word class, making the repair itself prior art. We quantify
its value, show how to recognise its absence from symptoms alone, map which
scripts it reaches, measure how widely it is deployed, and release a
\ArkiosVocab-entry Nepali--English tokenizer with a harness that regenerates
every number here from public data on a laptop.
\end{abstract}

\section{Introduction}

Fertility, the mean number of tokens a tokenizer spends per word, sets an
exchange rate between a language and every cost tied to a language model. With
context length and training budget held fixed, a language tokenized at 5 tokens
per word gets a fifth as much effective text as one tokenized at 1. For
low-resource languages the corpus is already the binding constraint, and
fertility compounds it.

We built a bilingual English-Nepali tokenizer and found Nepali fertility near
4.4 tokens per word. The standard diagnosis was a data shortage: Nepali is
low-resource, the mixture was English-dominant, and adding more Nepali is the
usual remedy. We raised the Nepali share and retrained. The number did not
move. We raised it again, and it still had no effect.

The cause was one character class. The HuggingFace \code{ByteLevel}
pre-tokenizer applies the regex from GPT-2 \citep{radford2019gpt2}, whose word
alternative is \pL{}, matching one or more Unicode letters. In Devanagari,
vowels are written as combining marks. The vowel sign \textsc{aa} falls under
category \cat{Mc}, \textsc{e} under \cat{Mn}, and the virama under \cat{Mn} as
well. None are \cat{L}, so \pL{} splits a Nepali word at every vowel sign. BPE,
which merges pairs only inside a pre-token, has no way to rejoin them. Under
\pL{}, the word \devrom{नेपाली}{nep\={a}l\={\i}} yields six pre-tokens; under
\pLM{}, just one.

The \code{o200k} pattern shipped with GPT-4o already uses a mark-aware word
class, and SentencePiece \citep{kudo2018sentencepiece} sidesteps letter-based
pre-tokenization entirely. This is not an unknown defect. We report what it
costs, how to recognise it from its symptoms, and how far it reaches across the
world's writing systems.

\paragraph{Contributions.}
\begin{enumerate}
\item A lower bound on fertility depending only on the pre-tokenizer, not on
      the vocabulary, corpus, or merge count (\S\ref{sec:bound}). It takes one
      regex match to evaluate.
\item A diagnostic (\S\ref{sec:sweep}). As the Nepali share of the training
      corpus is swept from \SweepLow{} to \SweepHigh{}, fertility shifts by
      \SweepControlSpreadPct\% under \pL{} and \SweepTreatmentSpreadPct\% under
      \pLM{}. A metric that does not respond to its supposed cause is
      constrained upstream of it, and the same logic extends well beyond
      tokenizers.
\item A controlled measurement, repeated \MpN{} times (\S\ref{sec:pair}). Two
      tokenizers differing in one character class score \PairControlNe{} and
      \PairTreatmentNe{} tokens per word on Nepali, at a cost of
      \PairEnCostPct\% of English fertility, which we trace to vocabulary
      reallocation rather than the regex. Repeated on Hindi, Bengali, Tamil and
      Malayalam, chosen in advance to span $\MpBoundSpread\times$ in predicted
      effect, every control arm falls within \MpMaxBoundGap\% of the bound,
      making Proposition~\ref{prop:bound} a predictor and not just a theorem.
\item A census rather than a shortlist (\S\ref{sec:census}): across
      \CensusRows{} HuggingFace repositories, \CensusTgLettersModels\% of the
      most-downloaded text-generation models carry a letters-only pre-tokenizer,
      accounting for \CensusTgLettersDl\% of their downloads.
\item The defect's scope (\S\ref{sec:scripts}): every abugida we tested, plus
      Arabic and Hebrew once their diacritics are present, where ratios reach
      $\VocalArabic{}\times$ and $\VocalHebrew{}\times$.
\item A \ArkiosVocab-entry Nepali--English tokenizer, with a harness that
      regenerates every number in this paper from public data, using no GPU and
      no account (\S\ref{sec:repro}).
\end{enumerate}

Two things we do not claim. We do not report the best Nepali fertility
available at any cost, since a 250{,}000-entry encoder tokenizer beats ours and
\S\ref{sec:cross} reports it. And we do not claim the repair is free:
\S\ref{sec:pair} measures what it costs English.

\section{Background}

\paragraph{Pre-tokenization.} BPE tokenizers do not apply BPE directly to raw
text: a regex first segments the input into chunks, and merges are learned and
applied independently within each chunk. GPT-2 introduced the pattern that most
byte-level pipelines still employ, whose word alternative matches \pL{} with an
optional leading space. Pre-tokenization exists to prevent merges from spanning
word boundaries, ensuring that no single token spells \code{``the cat''}; it
enforces this by rendering the boundaries impassable. Byte-level BPE
\citep{sennrich2016bpe, radford2019gpt2} maps text to UTF-8 bytes, ensuring no
input is out-of-vocabulary, and repeatedly merges the most frequent adjacent
pair. Byte-level coverage ensures that every script can be \emph{represented},
but ensures nothing about how efficiently.

\paragraph{Abugidas.} An alphabet represents vowels as letters. An abugida
assigns each consonant an inherent vowel and represents any other vowel as a
mark attached to that consonant. Unicode assigns those marks to categories
\cat{Mn} (non-spacing) and \cat{Mc} (spacing combining), separate from \cat{L}
\citep{unicode}. This separation is correct, since a vowel sign is not an
independent letter, and it is precisely what \pL{} fails to account for.

\paragraph{Fertility.} We report tokens per whitespace-delimited word alongside
bytes per token. Whitespace does not delimit words in Thai, Khmer, Lao, Myanmar,
Tibetan, Chinese, or Japanese; for those scripts we report tokens per 100
characters and leave fertility undefined, rather than reporting a number that
invites a comparison it cannot support.

\section{A bound that requires no training}
\label{sec:bound}

The mechanism is visible at the code-point level: three of the six characters in
\devrom{नेपाली}{nep\={a}l\={\i}} are marks, so \pL{} produces six single-character
pre-tokens where \pLM{} produces one (Table~\ref{tab:codepoints},
Appendix~\ref{app:codepoints}).

\begin{proposition}
\label{prop:bound}
Let $P$ be a pre-tokenizer and let $T$ be any BPE tokenizer whose merges are
learned and applied within the pre-tokens $P$ produces. Then for every string
$s$, $|T(s)| \ge |P(s)|$.
\end{proposition}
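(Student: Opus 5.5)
The argument formalises the informal claim that BPE cannot merge across pre-token boundaries. Write $P(s) = (p_1, \dots, p_k)$ for the pre-tokens produced on $s$, so $|P(s)| = k$. Because merges are learned and applied within pre-tokens, the tokenizer is a concatenation
\[
T(s) = T_0(p_1) \,\Vert\, T_0(p_2) \,\Vert\, \cdots \,\Vert\, T_0(p_k),
\]
where $T_0$ is the tokenizer applied to a single pre-token and $\Vert$ is sequence concatenation. Taking lengths gives $|T(s)| = \sum_{i=1}^{k} |T_0(p_i)|$. It is therefore enough to show $|T_0(p_i)| \ge 1$ for every $i$ and sum.

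For the per-pre-token bound I would use the BPE invariant. Encoding of $p_i$ starts from its initial segmentation, which is the $|p_i|$ bytes in the byte-level case. Each merge replaces two adjacent symbols by one, so the symbol count drops by exactly one per merge and never falls below one. Since the regex produces only non-empty matches, $|p_i| \ge 1$. A simple induction on the number of merges applied then shows that the final sequence has at least one token, so $|T_0(p_i)| \ge 1$. Summing over $i$ gives $|T(s)| \ge k = |P(s)|$.

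The bound does not depend on the merge list, its order, the vocabulary size, or the training corpus. It uses only the within-pre-token property and the non-emptiness of pre-tokens, which is what makes it hold for \emph{any} such $T$. Equality holds exactly when every pre-token is a single vocabulary entry.

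The main obstacle is pinning down the hypothesis rather than the arithmetic. I must check that the decomposition of $T(s)$ into per-pre-token encodings is exactly what ``learned and applied within the pre-tokens'' means. I must also confirm that no pre-token is empty; for regex-based pre-tokenizers this holds because empty matches are discarded. If $P$ were allowed to emit empty strings, I would restrict $|P(s)|$ to count only non-empty pre-tokens.
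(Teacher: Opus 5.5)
Your proof is correct and follows the paper's argument: both rest on merges acting only inside a single pre-token and each merge lowering that pre-token's piece count by exactly one, and your decomposition of $T(s)$ plus the observation that regex pre-tokens are non-empty (which the paper leaves implicit) turn its one-line count into a full proof. One precision in your aside: equality holds exactly when every pre-token is \emph{encoded} as a single token, and being a vocabulary entry guarantees that only if the merge sequence actually produces that entry, which can fail, e.g.\ under BPE-dropout or with hand-edited vocabularies.
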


\begin{proof}
Every merge replaces an adjacent pair within a single pre-token with one token,
and no merge spans two pre-tokens. The number of pieces is thus at least the
number of pre-tokens, for any vocabulary size, corpus, mixture, or number of
merges.
\end{proof}

Proposition~\ref{prop:bound} transforms a regex into a floor. Under \pL{},
Nepali is split into \NeShatter{} times as many pre-tokens as under \pLM{},
placing the fertility floor at \NeBoundControl{} tokens per word against
\NeBoundTreatment{}, before any training. No quantity of Nepali data can reach
below that floor, because the floor is not made of data.

\section{Diagnosis: a sweep that does not respond}
\label{sec:sweep}

Poor fertility on a low-resource language invites a mixture hypothesis, and
this hypothesis has an obvious test: sweep the language's share of the tokenizer
training corpus and observe the metric. We conducted that sweep under both word
classes at the full configuration, \PairVocab{} vocabulary and
\PairCorpusMB\,MB corpus, varying only the Nepali byte share.

\begin{figure}[t]
\centering
\begin{tikzpicture}
\begin{axis}[width=\linewidth, height=6.4cm,
  xlabel={Nepali share of the tokenizer training corpus},
  ylabel={Nepali fertility (tokens/word)},
  xtick={0,0.2,0.4,0.6,0.8,1.0},
  xticklabels={0\%,20\%,40\%,60\%,80\%,100\%},
  legend pos=north east, grid=major, ymin=0, ymax=7.02,
  legend cell align=left]
\addplot[mark=*, thick] coordinates {(0.05,4.8406) (0.20,4.7938) (0.35,4.7819) (0.50,4.7753) (0.65,4.7713) (0.80,4.7639) (0.95,4.7599)};
\addlegendentry{$\backslash$p\{L\}+ (control)}
\addplot[mark=square*, thick, dashed] coordinates {(0.05,1.9660) (0.20,1.7041) (0.35,1.6253) (0.50,1.5786) (0.65,1.5405) (0.80,1.5054) (0.95,1.4684)};
\addlegendentry{[$\backslash$p\{L\}$\backslash$p\{M\}]+ (treatment)}
\addplot[dotted, thick, domain=0:1, samples=2] {4.6726};
\addlegendentry{pre-tokenization bound (control)}
\end{axis}
\end{tikzpicture}
\caption{Nepali fertility against the Nepali share of the tokenizer training
corpus, \SweepPoints{} points per curve, each representing a tokenizer trained
from scratch. The control remains on its pre-tokenization bound across the
entire range. The treatment declines as Nepali content rises. The two curves
differ by one character class.}
\label{fig:sweep}
\end{figure}

The two curves differ in shape, not merely in level. Under \pL{}, fertility
varies by \SweepControlSpreadPct\% across a \SweepLow--\SweepHigh{} range of
Nepali content and remains within \SweepControlMaxAboveBoundPct\% of the
training-free bound throughout. Under \pLM{} it varies by
\SweepTreatmentSpreadPct\% and declines monotonically as Nepali content rises.
Table~\ref{tab:sweep} in Appendix~\ref{app:sweep} reports the numbers behind both
curves.

A flat response is evidence against the mixture hypothesis, not weak evidence
for it. A data-limited quantity responds to data; one that does not respond is
constrained upstream of the corpus, and a tokenizer pipeline has few components
upstream of the corpus: the normaliser, the pre-tokenizer, and the vocabulary
budget. Examining those three took an afternoon. Acquiring more Nepali text,
which is what we tried first, took considerably longer and could not have
succeeded.

\section{Matched pairs}
\label{sec:pair}

To isolate the effect of the word class, we trained two tokenizers differing in
a single character class. Both consume the same corpus bytes, verified by sha256,
under the same NFC normaliser, digit pre-split, byte-level mapping, trainer
settings, special tokens, and \PairVocab{} vocabulary. Both are produced by the
same training function invoked with a different \code{word\_class} argument, so
the control amounts to the production recipe with the repair removed rather than
a reimplementation of it.

Nepali fertility drops from \PairControlNe{} to \PairTreatmentNe{} tokens per
word, a factor of \PairSpeedup. The control settles within
\PairControlVsBound\% of its training-free bound of \NeBoundControl{}, placing
the corpus well clear of the binding constraint.

\paragraph{English pays \PairEnCostPct\%.} English fertility shifts in the
opposite direction, from \PairControlEn{} to \PairTreatmentEn{} tokens per word.
English \emph{pre-tokenization} is unchanged: the shatter ratio for Latin script
is exactly $\EnShatter\times$, so both arms segment English text into identical
pre-tokens. The regression stems from the vocabulary budget. Once Nepali words
survive pre-tokenization intact, they begin winning merge slots, and at fixed
$|V|$ those slots are taken from English. The sweep separates the two effects:
at \SweepLow{} Nepali the arms differ on English by \SweepEnGapLowPct\%, and at
\SweepHigh{} by \SweepEnGapHighPct\%. The repair is free at the regex level and
costs approximately \PairEnCostPct\% of English fertility at a fixed vocabulary.

\paragraph{Four more languages, chosen by prediction.} A single pair is a single
data point, and Proposition~\ref{prop:bound} makes a claim about every language.
We repeated the identical experiment for Hindi, Bengali, Tamil, and Malayalam:
the same \MpCorpusMB\,MB corpus, the same 50\% target-language byte share, the
same \MpVocab{} vocabulary, the same training function, with one argument
changed. The four were selected before any of them was trained, and selection
was based on the bound rather than on the outcome: they span
$\MpBoundSpread\times$ in \emph{predicted} control fertility, Hindi lowest and
Malayalam highest. Hindi earns its inclusion precisely because it is where the
predicted effect is smallest. A rule that holds only where the effect is large
is of limited value.

\begin{table*}[t]
\centering
\small
\setlength{\tabcolsep}{4pt}
\begin{tabular}{llrrrrrrr}
\toprule
& & \multicolumn{2}{c}{Bound (tok/word)} & \multicolumn{2}{c}{Measured (tok/word)} & & & \\
\cmidrule(lr){3-4}\cmidrule(lr){5-6}
Language & Script & \pL{} & \pLM{} & \pL{} & \pLM{} & $\Delta$ bound & Ratio & English \\
\midrule
Hindi & Devanagari & 3.30 & 1.12 & 3.36 & 1.32 & +1.8\% & 2.54$\times$ & +1.6\% \\
Bengali & Bengali & 4.65 & 1.15 & 4.73 & 1.50 & +1.8\% & 3.16$\times$ & +2.2\% \\
Nepali & Devanagari & 4.67 & 1.14 & 4.78 & 1.58 & +2.2\% & 3.03$\times$ & +2.1\% \\
Tamil & Tamil & 6.76 & 1.19 & 6.85 & 1.77 & +1.4\% & 3.86$\times$ & +2.2\% \\
Malayalam & Malayalam & 7.34 & 1.26 & 7.48 & 2.11 & +1.9\% & 3.54$\times$ & +2.4\% \\
\bottomrule
\end{tabular}

\caption{Five matched pairs on FLORES-200 devtest, one row per target language,
each \MpCorpusMB\,MB at 50\% target language by bytes with a \MpVocab{}
vocabulary. ``Bound'' denotes the training-free floor of \S\ref{sec:bound};
``$\Delta$ bound'' denotes how far the trained control arm settles above its own
floor; ``Ratio'' is measured control fertility over measured treatment
fertility; ``English'' is the change in English fertility from control to
treatment. Nepali is the headline pair of \S\ref{sec:pair};
Table~\ref{tab:pair} in Appendix~\ref{app:pair} provides its bytes-per-token
figures.}
\label{tab:multipair}
\end{table*}

The column of interest is $\Delta$ bound. The control arm settles
\MpMeanBoundGap\% above its own training-free floor on average and never more
than \MpMaxBoundGap\% above it (\MpMaxBoundGapLang).
Proposition~\ref{prop:bound} guarantees the sign of that gap but says nothing
about its magnitude: a floor that trained tokenizers missed by 40\% would be a
valid theorem and a useless predictor, and measuring the gap across a
$\MpBoundSpread\times$ range of predicted values is what distinguishes the two.
The explanation is mechanical rather than statistical: under \pL{} the
pre-tokenizer has already made every available cut, so BPE has almost nothing
left to merge and the floor is the answer.

The treatment arm behaves differently, as it should: it settles
\MpTreatAboveBoundMin--\MpTreatAboveBoundMax\% above \emph{its} own bound,
because once whole words survive pre-tokenization the binding constraint shifts
to the vocabulary budget, where a designer can act on it. The measured
control-to-treatment ratio spans from $\MpMinRatio\times$ (\MpMinRatioLang) to
$\MpMaxRatio\times$ (\MpMaxRatioLang), and English pays between \MpEnCostMin\%
and \MpEnCostMax\% across the five. One recipe detail: the digit pre-split,
\code{[0-9\dvdigits]\{1,3\}}, covers ASCII and Devanagari digits but not Tamil,
Bengali, or Malayalam ones. It is identical across both arms, so it cannot
influence any control--treatment contrast, but it does mean these tokenizers are
our production recipe transplanted rather than tuned per language; the experiment
measures one character class, not five deployable tokenizers.

\section{Comparison with released tokenizers}
\label{sec:cross}

Table~\ref{tab:main} compares our tokenizer against thirteen released tokenizers
on FLORES-200 devtest \citep{nllb2022flores}, using byte-identical NFC input for
every system. Because FLORES-200 is parallel across 204 languages, every
language examined in this paper is measured on the same content; that is what
gives the multi-script comparison in \S\ref{sec:scripts} its meaning.

\begin{table*}[t]
\centering
\footnotesize
\setlength{\tabcolsep}{4pt}
\begin{tabular}{llrrrrrc}
\toprule
& & & \multicolumn{2}{c}{Nepali} & \multicolumn{2}{c}{English} & \\
\cmidrule(lr){4-5}\cmidrule(lr){6-7}
Tokenizer & Family & $|V|$ & tok/word & B/tok & tok/word & B/tok & Lossless \\
\midrule
Llama-3 & English-centric & 128,256 & 3.76 & 4.84 & 1.24 & 4.90 & yes \\
DeepSeek-V3 & English-centric & 128,815 & 4.29 & 4.24 & 1.23 & 4.92 & yes \\
Qwen2.5 & English-centric & 151,665 & 6.57 & 2.77 & 1.26 & 4.82 & yes \\
cl100k (GPT-3.5/4) & English-centric & 100,277 & 6.98 & 2.61 & 1.24 & 4.90 & yes \\
GPT-2 & English-centric & 50,257 & 10.97 & 1.66 & 1.28 & 4.74 & yes \\
\addlinespace
o200k (GPT-4o) & Frontier & 200,019 & 2.32 & 7.84 & 1.23 & 4.95 & yes \\
Gemma-2 & Frontier & 256,000 & 3.13 & 5.81 & 1.28 & 4.75 & yes \\
Mistral NeMo (Tekken) & Frontier & 131,072 & 3.17 & 5.75 & 1.27 & 4.78 & yes \\
\addlinespace
IndicBERTv2 & Massively multi. & 250,000 & \textbf{1.58} & 11.54 & 1.24 & 4.92 & yes \\
BLOOM & Massively multi. & 250,680 & 1.72 & 10.56 & 1.25 & 4.85 & yes \\
NLLB-200 & Massively multi. & 256,204 & 1.92 & 9.48 & 1.40 & 4.34 & \textbf{no} \\
mT5 & Massively multi. & 250,100 & 2.64 & 6.90 & 1.54 & 3.94 & \textbf{no} \\
\addlinespace
Sarvam-1 & Indic & 68,096 & 2.66 & 6.85 & 1.50 & 4.06 & yes \\
\addlinespace
\sysname{} & Ours & 65,536 & 1.69 & 10.79 & 1.30 & 4.69 & yes \\
\bottomrule
\end{tabular}

\caption{Fertility and bytes per token on FLORES-200 devtest. ``Lossless''
indicates whether decode(encode($x$)) preserves every character. Bold marks the
best lossless Nepali fertility.}
\label{tab:main}
\end{table*}

\paragraph{Fidelity is part of the measurement.} A tokenizer that drops
characters it cannot represent earns a favourable fertility on text it cannot
encode. We therefore measure, for each tokenizer and language, the fraction of
input characters absent after a decode round-trip. NLLB-200 deletes curly
quotation marks, en-dashes, and ZWJ/ZWNJ, discarding 0.20\% of Nepali characters;
mT5 deletes ZWJ and ZWNJ. Both are flagged in Table~\ref{tab:main}, and their
fertilities are not comparable with the rest. The same check guards against the
opposite error: IndicBERTv2's decoder emits WordPiece \code{\#\#}
continuation markers, which a naive round-trip test would score as corruption.
It is lossless on every language we examine, and it outperforms us.

\paragraph{We do not win outright.} IndicBERTv2 reaches \IndicBertNeFert{}
tokens per word against our \ArkiosNeFert{}, with a \IndicBertVocab-entry
WordPiece vocabulary roughly four times ours; it is an encoder tokenizer that
does not preserve whitespace, which rules it out as a generative drop-in but not
as a fertility baseline. BLOOM, with \BloomVocab{} entries, reaches
\BloomNeFert{}. Our defensible claim is narrower than beating the frontier:
among tokenizers suitable for generative modelling, ours attains the lowest
Nepali fertility in the panel at a quarter of the next best's vocabulary.

\paragraph{The word class sorts the table.} Every letters-only tokenizer scores
\LettersOnlyBestNe{} or worse on Nepali; everything below that threshold is
mark-aware or does not split on letters at all, and the ordering is not a
vocabulary-size effect (Qwen2.5 carries \QwenVocab{} entries and still requires
\QwenNeFert{}). The word class is necessary but not sufficient, which the
matched pair of \S\ref{sec:pair} is designed to isolate: DeepSeek-V3 is
mark-aware and still requires \MarkAwareWorstNe{}, because its training text
contains little Devanagari.

\section{How much of the ecosystem inherits the pattern}
\label{sec:census}

Table~\ref{tab:main} covers thirteen tokenizers that we selected, which is
precisely the kind of evidence a reader should discount. To replace selection
with a rule, we classified the pre-tokenizer of every model in two listings from
the HuggingFace Hub \citep{wolf2020transformers}: the top \CensusLimit{}
repositories by 30-day downloads, and the top \CensusLimit{} carrying the
\code{text-generation} pipeline tag, \CensusRows{} distinct repositories in
total. Classification uses the same function as the rest of this paper, applied
to the \code{pre\_tokenizer} object extracted from the first 512\,kB of each
\code{tokenizer.json} via an HTTP range request. The vocabulary and merges, which
constitute the bulk of the file, are never fetched, so the census consumes a few
hundred megabytes rather than several terabytes. Repositories lacking a
\code{tokenizer.json} are classified from their file layout instead: a
SentencePiece model file implies no letter-based pre-tokenization and counts as
unaffected, while \code{vocab.json} with \code{merges.txt} is the
pre-\code{tokenizers} GPT-2 serialisation, which \code{transformers} loads
through \code{ByteLevel} with the regex enabled, and counts as affected. Dropping
either group would bias the result, and the two biases point in opposite
directions.

\begin{table*}[t]
\centering
\small
\setlength{\tabcolsep}{4pt}
\begin{tabular}{lrrrr}
\toprule
& & \multicolumn{3}{c}{Letters-only, as \% of\ldots} \\
\cmidrule(lr){3-5}
Population & Classified & repos & distinct tok. & downloads \\
\midrule
Text generation, top-$N$ & 1345 & 63.3\% & 52.9\% & 72.5\% \\
\quad of those, affected language tag & 78 & 67.9\% & 56.8\% & 60.8\% \\
All repositories, top-$N$ & 1126 & 45.2\% & 42.2\% & 30.3\% \\
\quad of those, affected language tag & 149 & 50.3\% & 40.5\% & 18.9\% \\
\bottomrule
\end{tabular}

\caption{Share of repositories using a letters-only pre-tokenizer, under three
denominators: repositories, distinct \code{tokenizer.json} files (deduplicated
by content hash, so a base model with many fine-tunes counts once; repositories
that ship no such file have nothing to deduplicate and are omitted from that
column alone), and 30-day downloads. Percentages are of the \emph{classified}
repositories in each row; repositories with no tokenizer at all,
\CensusExcludedNoTok{} of the overall top-\CensusLimit{}, comprising vision,
audio, and diffusion models and weights-only redistributions, as well as gated
ones, are excluded from every denominator.}
\label{tab:census}
\end{table*}

Among text-generation models, \CensusTgLettersModels\% of the \CensusTgN{}
classified repositories carry a letters-only pre-tokenizer. Those repositories
account for \CensusTgLettersTok\% of the \CensusTgTok{} distinct tokenizers
behind them and \CensusTgLettersDl\% of 30-day downloads. The three denominators
are reported jointly because each is vulnerable to a distinct objection:
repository counts inflate whichever base model has the most fine-tunes,
distinct-tokenizer counts give a niche tokenizer the same weight as one used by
half the ecosystem, and download counts track whatever a few large deployments
happen to pull this month. They agree here, which is the useful outcome.

\paragraph{A field whose default is easy to get wrong.} HuggingFace's
\code{ByteLevel} pre-tokenizer applies GPT-2's pattern internally unless
\code{use\_regex} is set to false, and the field is typically \emph{absent} from
the file. GPT-2's own \code{tokenizer.json} omits it, and the library fills in
true, after which \devrom{नेपाली}{nep\={a}l\={\i}} emerges as six pieces.
Reading an absent field as ``disabled'' would classify the canonical instance of
this defect as unaffected, and would do the same for every file written before
the field existed. Our classifier treats absence as enabled, matching the
library's behaviour; the appendix enumerates the distinct patterns behind these
counts.

We make no larger claim than the one these counts support: the letters-only word
class is what the ecosystem ships by default, not a historical curiosity that
has been cleaned up. The Limitations section sets out what the census does not establish.

\section{Which scripts the defect reaches}
\label{sec:scripts}

Because the bound requires no training, we assess its reach across every
language in a parallel corpus at negligible cost. Table~\ref{tab:bounds} reports
the shatter ratio, pre-tokens under \pL{} divided by pre-tokens under \pLM{},
for all \NumLanguages{} languages on identical content.

\begin{table}[t]
\centering
\footnotesize
\setlength{\tabcolsep}{3.5pt}
\begin{tabular}{lrrr}
\toprule
& & \multicolumn{2}{c}{Bound, tok/word} \\
\cmidrule(lr){3-4}
Language & Shatter & \pL{} & \pLM{} \\
\midrule
Thai & \textbf{9.02$\times$} & -- & -- \\
Burmese & \textbf{7.58$\times$} & -- & -- \\
Lao & \textbf{5.95$\times$} & -- & -- \\
Malayalam & \textbf{5.82$\times$} & 7.34 & 1.26 \\
Tamil & \textbf{5.66$\times$} & 6.76 & 1.19 \\
Khmer & \textbf{5.11$\times$} & -- & -- \\
Kannada & \textbf{4.86$\times$} & 6.09 & 1.25 \\
Telugu & \textbf{4.57$\times$} & 5.65 & 1.24 \\
Marathi & \textbf{4.25$\times$} & 4.93 & 1.16 \\
Odia & \textbf{4.14$\times$} & 4.81 & 1.16 \\
Nepali & \textbf{4.09$\times$} & 4.67 & 1.14 \\
Bengali & \textbf{4.03$\times$} & 4.65 & 1.15 \\
Gujarati & \textbf{3.51$\times$} & 4.03 & 1.15 \\
Sinhala & \textbf{3.25$\times$} & 4.12 & 1.27 \\
Hindi & \textbf{2.95$\times$} & 3.30 & 1.12 \\
Punjabi & \textbf{2.87$\times$} & 3.25 & 1.13 \\
Tibetan & \textbf{1.47$\times$} & -- & -- \\
\addlinespace
Arabic & \textbf{1.08$\times$} & 1.22 & 1.13 \\
\addlinespace
\emph{5 alphabets} & 1.00$\times$ & \multicolumn{2}{c}{\emph{identical}} \\
\addlinespace
\emph{Hebrew} & 1.00$\times$ & \multicolumn{2}{c}{\emph{identical}} \\
\addlinespace
\emph{2 logographics} & 1.00$\times$ & \multicolumn{2}{c}{\emph{identical}} \\
\bottomrule
\end{tabular}

\caption{The training-free pre-tokenization bound across FLORES-200 devtest.
Shatter ratio is pre-tokens under \pL{} over pre-tokens under \pLM{};
$1.00\times$ means the two word classes are indistinguishable, and those
languages are grouped by family. Dashes mark the languages that do not separate
words with spaces, where tok/word is undefined;
Table~\ref{tab:boundsfull} in Appendix~\ref{app:bounds} reports those in tokens
per 100 characters and provides the full per-language grid. The corpus is
parallel, so every row describes the same content.}
\label{tab:bounds}
\end{table}

All \NumAbugida{} abugidas are affected, from $\MinAbugidaShatter\times$
(\MinAbugidaShatterLang) to $\MaxShatter\times$ (\MaxShatterLang); Latin,
Cyrillic, Hangul, and Han sit at exactly $1.00\times$, since ordinary text in
those scripts carries no combining marks after NFC.

\paragraph{Tibetan is a partial exception.} At the low end, Tibetan separates
syllables with U+0F0B \textsc{tsheg} (punctuation, \cat{Po}), which splits under
both word classes, so its pre-tokens are syllables either way and the marks
merely fragment an already short unit. The defect tracks how much of a word's
information Unicode classifies as marks, not whether a script is Brahmic.

\paragraph{Arabic and Hebrew are dormant cases.} Table~\ref{tab:bounds} reports
Arabic at $1.08\times$ and Hebrew at exactly $1.00\times$, because the evaluation
text omits their diacritics, as modern prose does. Vocalised Arabic and pointed
Hebrew appear in scripture, poetry, dictionaries, and language-learning
material, much of the text a low-resource pipeline is handed. Measured on the
same sentence with its marks restored (Table~\ref{tab:vocal},
Appendix~\ref{app:codepoints}), vocalised Arabic shatters $\VocalArabic{}\times$
and pointed Hebrew $\VocalHebrew{}\times$, comparable to Devanagari. A pipeline
audited on modern newswire will pass and then fail on the first vocalised corpus
it encounters.

\section{Downstream evaluation}
\label{sec:downstream}

Fertility measures compression. Whether the resulting model is actually better
is a separate question.

\paragraph{Design.} Two tokenizers split the same text into different numbers of
tokens, so per-token loss measures different quantities and cross-tokenizer
perplexity is meaningless; every figure here is \textbf{bits per byte}, which
normalises by the underlying text. Three runs share an architecture (268M
parameters, 16 layers, $d_{\text{model}}=1024$, context 2048), an optimiser
configuration, and a seed. Run A trains the broken tokenizer on $N$ tokens; B
trains the fixed tokenizer on the same $N$; C trains the broken tokenizer on
enough tokens to read the same \emph{bytes} as B. At a fixed token budget, B
reads approximately $\PairSpeedup\times$ more text than A, so an advantage for B
could be attributed to having read more; C eliminates that explanation at a
cost of $\AblComputeRatio\times$ B's compute. A and C share one shard set, C's
shards being A's read further, so they see identical data in identical order.
Evaluation relies on held-out splits disjoint from training: FineWeb-2's
designated \code{npi\_Deva} test split for Nepali, and C4's validation split for
English, a different corpus from the FineWeb-Edu the models train on.

\begin{table}[t]
\centering
\scriptsize
\setlength{\tabcolsep}{3pt}
\begin{tabular}{llrrr}
\toprule
& & \multicolumn{2}{c}{Held-out bits/byte} & \\
\cmidrule(lr){3-4}
Condition & Budget & Nepali & English & Train \\
\midrule
A~~broken \pL & 4.00B & 0.4080 & 1.1706 & 0.7125 \\
B~~fixed \pLM & 4.00B & \textbf{0.3899} & 1.1490 & 0.6720 \\
C~~broken \pL & 6.36B & 0.3925 & 1.1342 & 0.6060 \\
\bottomrule
\end{tabular}

\caption{Held-out bits per byte; lower is better. C receives
$\AblComputeRatio\times$ B's compute by construction.}
\label{tab:ablation}
\end{table}

\paragraph{Fixing the regex improves the model.} At equal compute, B reaches
\AblBNeBpb{} bits per byte on held-out Nepali against A's \AblANeBpb{}, an
improvement of \AblBvsANe\%. English improves as well, by \AblBvsAEn\%, even
though \S\ref{sec:pair} shows the fixed tokenizer costs \PairEnCostPct\% of
English \emph{fertility}. Both arms spend the same token budget, and B's larger
tokens mean it reads more text in both languages; the fertility cost is real,
yet the downstream effect still runs the other way.

\paragraph{English is the control channel, and it behaves.} The regex provably
does nothing to Latin script: the shatter ratio is exactly $\EnShatter\times$
(\S\ref{sec:scripts}), so B and C segment English into identical pre-tokens and
differ only in compute. On English they order by compute, C ahead of B by
\AblBvsCEn\%. That is the ordering a control should produce, and it is what
licenses reading the Nepali comparison as something other than noise.

\paragraph{On Nepali the ordering inverts.} B reaches \AblBNeBpb{} against C's
\AblCNeBpb{} while consuming \AblComputeFrac\% of C's compute. We state the
result in that form deliberately. The raw gap is \AblBvsCNe\% on a single seed
with no error bars, which is too thin to defend on its own; the compute-normalised
statement does not depend on the gap's magnitude, only on its sign, and the
English channel shows what the sign would have been had compute alone been
responsible. Read together, the two channels place the benefit \AblVerdict{}.

\paragraph{What this does not establish.} One seed per condition, one model
scale, one language pair, and bits per byte rather than task accuracy: a reader
who wants a claim about downstream task quality will not find it here. What the
three conditions support is narrower, namely that at a fixed compute budget the
fixed tokenizer produces a better Nepali model, and that advantage survives
handing the broken tokenizer the same bytes and half again as much compute.

\section{Reproducibility}
\label{sec:repro}

A single command regenerates every number in this paper, including those quoted
in prose:

\begin{quote}
\code{make paper-harness}
\end{quote}

\ifdeanon
The harness, the manuscript source, and the released tokenizer are available at
\anonurl{\repourl} and \anonurl{\hfurl}.
\else
The harness, the manuscript source, and the released tokenizer will be released
publicly; the URL is withheld here to preserve anonymity.
\fi

It requires no GPU, no HuggingFace account, and no private data. It downloads
FLORES-200 as a plain tarball (sha256 \code{\FloresSha}\ldots), streams a public
Nepali--English corpus, trains the matched arms and the sweep, evaluates each
tokenizer, and writes a machine-readable \code{results.json} together with the
\LaTeX{} tables this document includes. A \code{-{}-quick} configuration
completes in about fifteen minutes and reproduces the bound across
\NumLanguages{} languages and the vocalisation probe exactly, since neither
depends on the trained arms.

\code{results.json} records the resolved Hub commit of every baseline, the
sha256 of every corpus, the version of every library, and the harness's git
commit (\code{\HarnessCommit}); Appendix~\ref{app:repro} provides the corpora,
splits, and compute budget in full. Every tokenizer result here ran on a laptop
CPU. Only the downstream evaluation (\S\ref{sec:downstream}) requires a GPU:
three runs totalling 14.36B tokens over 268M-parameter models
($2.3\times10^{19}$ FLOPs by the $6ND$ estimate of \citealp{kaplan2020scaling}),
\textbf{19.1 GPU-hours on a single NVIDIA H100 80GB SXM5}. The census
(\S\ref{sec:census}) is a separate command, because it reads a live service
rather than a pinned artifact: rankings and download counts shift, so a rerun
will not reproduce our counts exactly, though every repository we classified is
recorded in \code{census.json}.

\section{Related work}

Byte-pair encoding originated as a compression algorithm \citep{gage1994bpe},
entered NMT through \citet{sennrich2016bpe}, and became byte-level in
\citet{radford2019gpt2}, whose pre-tokenization regex is the subject of this
paper (\citealp{mielke2021between} survey the design space). SentencePiece
\citep{kudo2018sentencepiece} avoids letter-based pre-tokenization and the
failure that follows from it, at the cost of the fidelity issues our
measurements reveal; tokenizer-free models sidestep the question entirely by
operating on bytes or characters \citep{xue2022byt5, clark2022canine}, at a
compute cost byte-level BPE exists to avoid. Indic tokenization has received
targeted attention, including the IndicNLP suite
\citep{kakwani2020indicnlpsuite}, and massively multilingual models have long
contended with script diversity \citep{conneau2020xlmr}; Devanagari
fragmentation under English-centric tokenizers has been observed before.

Tokenizer quality outside English is well documented.
\citet{rust2021good} show dedicated monolingual tokenizers outperform
multilingual ones; \citet{ahia2023all} and \citet{petrov2023language} quantify
the resulting cost and equity gap across languages;
\citet{limisiewicz2023tokenization} study how vocabulary allocation propagates
into model quality; \citet{ali2024tokenizer} show tokenizer choice materially
affects downstream performance and cost. \citet{zouhar2023noiseless} give an
information-theoretic account of what makes a tokenizer good, and
\citet{goldman2024unpacking} and \citet{lotz2025beyond} examine how far
compression predicts downstream quality, the question \S\ref{sec:downstream}
puts to condition C. \citet{schmidt2024tokenization} argue tokenization quality
is not reducible to compression at all and treat pre-tokenization as a
first-class variable, the closest existing precedent to that choice here;
\citet{dagan2024getting} likewise find the pre-tokenization regex measurably
changes downstream performance, for code. That literature treats fertility as
an outcome of vocabulary allocation and corpus composition; our result is
orthogonal to it, a component upstream of both that the corpus-side
interventions it evaluates cannot reach.

Closest to this work, \citet{velayuthan2025egalitarian} identify pre-tokenization
as the source of unfair representation for Tamil, Sinhala, and Hindi and propose
Grapheme Pair Encoding, segmenting on graphemes rather than bytes. We agree on
the diagnosis and differ on what follows from it: they introduce and evaluate a
new segmentation algorithm; we quantify what the existing defect costs under a
one-variable control, derive the bound that explains why the cost resists
corpus composition, provide a diagnostic that identifies the failure without
reading the pre-tokenizer, and measure both the downstream effect and the
defect's prevalence across the deployed ecosystem. GPE changes the segmentation
unit; the repair we measure is a one-character-class edit that leaves
byte-level BPE intact and is already present in \code{o200k}.

\section{Conclusion}

A character class chosen for English places a floor under Nepali tokenization
that Nepali data cannot lift. The floor follows from the pre-tokenizer rather
than from measurement: BPE cannot merge across pre-token boundaries, so a regex
that splits words at every vowel sign bounds fertility before training begins.
On parallel text the effect reaches every abugida we tested and lies dormant in
vocalised Arabic and pointed Hebrew.

The floor is also tight. Across \MpN{} matched pairs spanning
$\MpBoundSpread\times$ in predicted effect, every control arm settles within
\MpMaxBoundGap\% of what a regex match predicts before any training, which makes
the bound a design-time check rather than a post-hoc explanation. Yet the word
class it checks for remains the ecosystem's default, present in
\CensusTgLettersModels\% of the most-downloaded text-generation repositories on
the HuggingFace Hub. Repairing it takes one character class, and the repair was
already known. The diagnostic generalises further than the defect does: a metric
that will not respond to the input that supposedly drives it is constrained
somewhere else, and in a tokenizer pipeline the candidates upstream of the
corpus number three: the normaliser, the pre-tokenizer, and the vocabulary
budget.

\section*{Limitations}
\label{sec:limits}

\paragraph{The repair is prior art.} \code{o200k} already employs a mark-aware
word class. Our contribution lies in measurement, diagnosis, and scope.

\paragraph{\MpN{} trained language pairs, not \NumLanguages{}.} The bound in
\S\ref{sec:scripts} spans \NumLanguages{} languages and requires no training.
The trained pairs cover \MpN{}: \MpLangs. Thai, Khmer, Lao, Myanmar, and Tibetan
are absent from the trained set; they are also the languages where fertility is
undefined and the bound is hardest to interpret. The sweep in
\S\ref{sec:sweep} is Nepali--English only.

\paragraph{Every trained pair is 50/50 with English.} The four new pairs fix the
target-language byte share at 50\%, far above what a general-purpose multilingual
tokenizer would allocate to any single language. That choice keeps the five pairs
comparable to each other and to the Nepali pair, and the sweep confirms the
control arm is insensitive to the share; it does mean the treatment fertilities
exceed what a realistic multilingual mixture would yield, and the
control-to-treatment ratios are correspondingly optimistic.

\paragraph{The census measures exposure.} \S\ref{sec:census} counts
repositories whose pre-tokenizer is letters-only. It does not establish that any
of them is harmed by it: for a model that only ever processes Latin script, the
word class is irrelevant. It is also a snapshot of a live ranking, taken once,
and it excludes gated repositories along with every repository that ships no
\code{tokenizer.json}. The language-tag restriction depends on repository
metadata that authors maintain unevenly.

\paragraph{Translated evaluation text.} FLORES-200 \citep{nllb2022flores} is
translated from English-sourced Wikimedia articles. Its parallelism is what makes
cross-language comparison valid, and it is cleaner and more uniform than native
web text. The harness also supports a secondary evaluation on native FineWeb-2
text.

\paragraph{Whitespace words.} Tokens per whitespace-delimited word is a coarse
measure for morphologically rich languages and undefined for scripts without
inter-word spaces. We report tokens per 100 characters alongside it and omit
fertility where it would mislead.

\paragraph{One seed per downstream condition.} \S\ref{sec:downstream} trains each
of A, B, and C once, at one model scale, and reports bits per byte rather than
task accuracy. The B--C gap on Nepali is \AblBvsCNe\%, too small to separate
from seed variance on a single run; we therefore state that comparison in
compute-normalised form, where the conclusion depends on the sign rather than
the magnitude. Repeated seeds, a second model scale, and downstream task
evaluation would all strengthen the section, and we have not run them.

\paragraph{Fixed vocabulary.} Every comparison holds $|V|$ fixed. The English
cost in \S\ref{sec:pair} follows from that choice and would shrink at a larger
vocabulary. We have not mapped the trade-off.

\section*{Ethical Considerations}

The defect this paper measures is not distributed evenly. Its cost falls
entirely on languages written in abugida scripts and, when diacritics are
present, on Arabic and Hebrew. For Latin, Cyrillic, Hangul, and Han the shatter
ratio is exactly $\EnShatter\times$, so speakers of those languages could not
have noticed the problem from their own usage. Because fertility translates
directly into context length, training cost, and per-token API price, a defect
of this shape widens an existing gap between well-resourced and low-resourced
languages while remaining invisible to the people best placed to correct it.

The failure mode compounds that asymmetry. It presents as a data shortage, the
explanation practitioners working on low-resource languages already expect and
the one they are most likely to act on. Effort then goes into acquiring text
that cannot help, and the misattribution is most expensive where budgets are
smallest.

We see no dual-use concern in this work. The evaluation corpora are public and
contain no personal data. The tokenizer we release is trained on public web text
alongside a private custom corpus that we do not redistribute, and it inherits
whatever biases those sources carry; we make no claim that it is suitable for
deployment without further evaluation. Every experimental result in this paper,
including the matched pair, the sweep, and the downstream ablation, uses only
public data, so none of them depends on the private corpus.

\section*{Acknowledgements}

\paragraph{Use of generative AI.} We disclose this under the ACL Policy on
Publication Ethics. A generative AI assistant was used for literature search,
for generating portions of the experimental code, and for language editing and
sentence restructuring. The authors verified all generated code against the
released harness, checked all cited work directly, and take full responsibility
for the correctness of the methods, results, and writing.

\bibliography{references}

\appendix

\section{The bound in full}
\label{app:bounds}

Table~\ref{tab:boundsfull} is the complete version of Table~\ref{tab:bounds},
adding tokens per 100 characters, defined for every language including the
seven that do not separate words with spaces, and the writing-system family
of each row.

\begin{table*}[h]
\centering
\footnotesize
\setlength{\tabcolsep}{4pt}
\begin{tabular}{llrrrrrr}
\toprule
& & & \multicolumn{2}{c}{Bound, tok/word} & \multicolumn{2}{c}{Bound, tok/100\,char} & \\
\cmidrule(lr){4-5}\cmidrule(lr){6-7}
Language & Script & Shatter & $\backslash$p\{L\}+ & mark & $\backslash$p\{L\}+ & mark & Type \\
\midrule
Thai & Thai & \textbf{9.02$\times$} & -- & -- & 37.4 & 4.1 & abugida \\
Burmese & Myanmar & \textbf{7.58$\times$} & -- & -- & 63.9 & 8.4 & abugida \\
Lao & Lao & \textbf{5.95$\times$} & -- & -- & 39.7 & 6.7 & abugida \\
Malayalam & Malayalam & \textbf{5.82$\times$} & 7.34 & 1.26 & 72.9 & 12.5 & abugida \\
Tamil & Tamil & \textbf{5.66$\times$} & 6.76 & 1.19 & 73.0 & 12.9 & abugida \\
Khmer & Khmer & \textbf{5.11$\times$} & -- & -- & 73.6 & 14.4 & abugida \\
Kannada & Kannada & \textbf{4.86$\times$} & 6.09 & 1.25 & 70.5 & 14.5 & abugida \\
Telugu & Telugu & \textbf{4.57$\times$} & 5.65 & 1.24 & 71.7 & 15.7 & abugida \\
Marathi & Devanagari & \textbf{4.25$\times$} & 4.93 & 1.16 & 70.0 & 16.5 & abugida \\
Odia & Odia & \textbf{4.14$\times$} & 4.81 & 1.16 & 69.3 & 16.8 & abugida \\
Nepali & Devanagari & \textbf{4.09$\times$} & 4.67 & 1.14 & 68.3 & 16.7 & abugida \\
Bengali & Bengali & \textbf{4.03$\times$} & 4.65 & 1.15 & 69.0 & 17.1 & abugida \\
Gujarati & Gujarati & \textbf{3.51$\times$} & 4.03 & 1.15 & 66.0 & 18.8 & abugida \\
Sinhala & Sinhala & \textbf{3.25$\times$} & 4.12 & 1.27 & 64.6 & 19.9 & abugida \\
Hindi & Devanagari & \textbf{2.95$\times$} & 3.30 & 1.12 & 64.0 & 21.7 & abugida \\
Punjabi & Gurmukhi & \textbf{2.87$\times$} & 3.25 & 1.13 & 62.1 & 21.7 & abugida \\
Tibetan & Tibetan & \textbf{1.47$\times$} & -- & -- & 74.4 & 50.6 & abugida \\
\addlinespace
Arabic & Arabic & \textbf{1.08$\times$} & 1.22 & 1.13 & 20.5 & 19.0 & abjad \\
\addlinespace
\emph{5 languages} & Cyrillic, Hangul, Latin & 1.00$\times$ & \multicolumn{4}{c}{\emph{identical under both classes}} & alphabet \\
\addlinespace
\emph{Hebrew} & Hebrew & 1.00$\times$ & \multicolumn{4}{c}{\emph{identical under both classes}} & abjad \\
\addlinespace
\emph{2 languages} & Han, Japanese & 1.00$\times$ & \multicolumn{4}{c}{\emph{identical under both classes}} & logographic \\
\bottomrule
\end{tabular}

\caption{The training-free pre-tokenization bound across FLORES-200 devtest, in
full. Languages at exactly $1.00\times$ have identical bounds under both word
classes and are grouped by family; their per-language values are in the
harness's \code{table\_bounds.csv}.}
\label{tab:boundsfull}
\end{table*}

\section{Corpora, splits and compute budget}
\label{app:repro}

\code{results.json} records the resolved Hub commit of each baseline tokenizer,
the sha256 of each corpus, the version of each library, and the harness's git
commit (\code{\HarnessCommit}). Three baselines load from widely used
republications of a gated upstream, allowing the harness to run without an
account; each is annotated with its source. The FineWeb-2 evaluation slice and
the tokenizer training corpus are partitioned by document index modulo 20,
disjoint at any corpus size. Training corpora are FineWeb-2
\citep{penedo2025fineweb2} for each non-English language and FineWeb-Edu
\citep{penedo2024fineweb} for English; the downstream English validation set is
the validation split of C4 \citep{raffel2020exploring}, distinct from the corpus
those models train on. The census (\S\ref{sec:census}) runs as a separate
command, \code{python -m tokenizer.paper.census}, because it reads a live service
rather than a pinned artifact: rankings and download counts shift, so a rerun
will not reproduce our counts exactly, though every repository we classified is
recorded in \code{census.json}.

\paragraph{Computational budget.} Every tokenizer result in this paper ran on a
laptop CPU: a single 65{,}536-entry BPE over the \PairCorpusMB\,MB corpus takes
one to eight minutes; the \MpN{} matched pairs and the \SweepPoints-point sweep
in both arms amount to 24 tokenizers, roughly three hours total, dominated by
streaming rather than training; the \NumLanguages-language bound and the
vocalisation probe require no training and finish in seconds; the census is
network-bound, \CensusRows{} range requests in roughly fifteen minutes. Only the
downstream evaluation (\S\ref{sec:downstream}) requires a GPU: the three runs
total 14.36B tokens over 268M-parameter models ($2.3\times10^{19}$ FLOPs by the
$6ND$ estimate of \citealp{kaplan2020scaling}) and took \textbf{19.1 GPU-hours on
a single NVIDIA H100 80GB SXM5} (5.3\,h each for A and B, 8.5\,h for C) using a
custom C/CUDA trainer in bf16 with cuDNN fused attention at 34\% of peak FLOPs.

\section{The defect at code-point level}
\label{app:codepoints}

Table~\ref{tab:codepoints} presents the failure at the level where it occurs,
and Table~\ref{tab:vocal} presents the same mechanism switching on in Arabic and
Hebrew once their diacritics are present.

\begin{table}[h]
\centering
\small
\begin{tabular}{llll}
\toprule
Code point & Character & Category & Under \pL{} \\
\midrule
U+0928 & \dev{न}{na}        & \cat{Lo} & kept \\
U+0947 & \dev{े}{-e}      & \cat{Mn} & \textbf{cut} \\
U+092A & \dev{प}{pa}        & \cat{Lo} & kept \\
U+093E & \dev{ा}{-\={a}} & \cat{Mc} & \textbf{cut} \\
U+0932 & \dev{ल}{la}        & \cat{Lo} & kept \\
U+0940 & \dev{ी}{-\={\i}}& \cat{Mc} & \textbf{cut} \\
\bottomrule
\end{tabular}
\caption{The word \devrom{नेपाली}{nep\={a}l\={\i}} (``Nepali'') by code point.
Three of its six characters are marks, so \pL{} produces six single-character
pre-tokens where \pLM{} produces one.}
\label{tab:codepoints}
\end{table}

\begin{table}[h]
\centering
\small
\begin{tabular}{lrrr}
\toprule
Sample & Marks & Unvocalised & Vocalised \\
\midrule
Arabic & 33 & 1.00$\times$ & \textbf{6.50$\times$} \\
Hebrew & 23 & 1.00$\times$ & \textbf{6.14$\times$} \\
Devanagari & 22 & 1.00$\times$ & \textbf{5.22$\times$} \\
\bottomrule
\end{tabular}

\caption{Shatter ratio on the same sentence with its combining marks stripped
(``unvocalised'') and intact (``vocalised'').}
\label{tab:vocal}
\end{table}

\section{The Nepali pair in bytes per token}
\label{app:pair}

Table~\ref{tab:multipair} reports the headline pair in tokens per word, the unit
the rest of the paper uses. Table~\ref{tab:pair} supplements this with bytes per
token for the same two tokenizers, the quantity that converts to bits per byte in
\S\ref{sec:downstream} and the reason the downstream arms read different amounts
of text at the same token budget.

\begin{table}[h]
\centering
\small
\setlength{\tabcolsep}{3pt}
\begin{tabular}{lrrrr}
\toprule
& \multicolumn{2}{c}{Nepali} & \multicolumn{2}{c}{English} \\
\cmidrule(lr){2-3}\cmidrule(lr){4-5}
Word class & tok/word & B/tok & tok/word & B/tok \\
\midrule
Control & 4.78 & 3.81 & 1.28 & 4.76 \\
Treatment & \textbf{1.58} & 11.53 & 1.30 & 4.67 \\
\midrule
Relative change & $-$66.9\% & $+$202.5\% & +2.12\% & -2.08\% \\
\bottomrule
\end{tabular}

\caption{The controlled comparison on FLORES-200 devtest. Corpus
\PairCorpusMB\,MB at \PairNepaliFrac{} Nepali by bytes, vocabulary \PairVocab{},
identical in every other respect.}
\label{tab:pair}
\end{table}

\section{The mixture sweep, in numbers}
\label{app:sweep}

Table~\ref{tab:sweep} contains the data behind Figure~\ref{fig:sweep}. Each row
corresponds to two tokenizers trained from scratch on the same corpus, differing
in one character class.

\begin{table*}[h]
\centering
\footnotesize
\setlength{\tabcolsep}{3.5pt}
\begin{tabular}{lrrrrrrr}
\toprule
Nepali share of corpus & 5\% & 20\% & 35\% & 50\% & 65\% & 80\% & 95\% \\
\midrule
Control & 4.84 & 4.79 & 4.78 & 4.78 & 4.77 & 4.76 & 4.76 \\
\quad relative spread & \multicolumn{7}{c}{1.7\%} \\
Treatment & 1.97 & 1.70 & 1.63 & 1.58 & 1.54 & 1.51 & 1.47 \\
\quad relative spread & \multicolumn{7}{c}{33.9\%} \\
\midrule
Pre-tokenization bound & \multicolumn{7}{c}{4.67 vs 1.14} \\
\bottomrule
\end{tabular}

\caption{The mixture sweep. Nepali fertility on FLORES-200 devtest, by Nepali
share of the tokenizer training corpus. Control is \pL{}, treatment is \pLM{};
the two arms are identical in every other respect. The last row is the
training-free bound of \S\ref{sec:bound} for each arm.}
\label{tab:sweep}
\end{table*}

\section{The distinct patterns behind the census}
\label{app:census}

Table~\ref{tab:censuspat} enumerates the most common distinct pre-tokenization
patterns across the repositories of \S\ref{sec:census}, with an example
repository for each. Two of them agree for their first sixty characters and
diverge only in a quantifier further right, so the pattern text alone does not
identify a row. The letters-only rows are not the work of anyone who chose a
letters-only word class. They are GPT-2's pattern and its direct descendants,
reaching those repositories because \code{ByteLevel} applies it whenever
\code{use\_regex} is left at its default, which is the sense in which this is an
inherited default rather than a decision.

\begin{table}[h]
\centering
\small
\begin{tabular}{rlp{0.46\linewidth}}
\toprule
Repos & Word class & Pattern (truncated) \\
\midrule
664 & not letter-based & \emph{no split stage that can cut inside a word}\newline{\scriptsize\itshape e.g.\ \seqsplit{sentence-transformers/all-MiniLM-L6-v2}} \\
510 & letters only & \texttt{\scriptsize\seqsplit{(?i:'s|'t|'re|'ve|'m|'ll|'d)|[\textasciicircum{}\textbackslash{}r\textbackslash{}n\textbackslash{}p\{L\}\textbackslash{}p\{N\}]?\textbackslash{}p\{L\}+|\textbackslash{}p\{N\}| ?[\textasciicircum{}\ldots}}\newline{\scriptsize\itshape e.g.\ \seqsplit{Qwen/Qwen3-0.6B}} \\
325 & letters only & \texttt{\scriptsize\seqsplit{'s|'t|'re|'ve|'m|'ll|'d| ?\textbackslash{}p\{L\}+| ?\textbackslash{}p\{N\}+| ?[\textasciicircum{}\textbackslash{}s\textbackslash{}p\{L\}\textbackslash{}p\{N\}]+|\textbackslash{}s+\ldots}}\newline{\scriptsize\itshape e.g.\ \seqsplit{openai/clip-vit-base-patch32}} \\
246 & letters only & \texttt{\scriptsize\seqsplit{(?i:'s|'t|'re|'ve|'m|'ll|'d)|[\textasciicircum{}\textbackslash{}r\textbackslash{}n\textbackslash{}p\{L\}\textbackslash{}p\{N\}]?\textbackslash{}p\{L\}+|\textbackslash{}p\{N\}\{1,3\}\ldots}}\newline{\scriptsize\itshape e.g.\ \seqsplit{zai-org/GLM-OCR}} \\
108 & mark-aware & \texttt{\scriptsize\seqsplit{(?i:'s|'t|'re|'ve|'m|'ll|'d)|[\textasciicircum{}\textbackslash{}r\textbackslash{}n\textbackslash{}p\{L\}\textbackslash{}p\{N\}]?[\textbackslash{}p\{L\}\textbackslash{}p\{M\}]+|\textbackslash{}p\{\ldots}}\newline{\scriptsize\itshape e.g.\ \seqsplit{Qwen/Qwen3.5-9B}} \\
60 & mark-aware & \texttt{\scriptsize\seqsplit{[\textasciicircum{}\textbackslash{}r\textbackslash{}n\textbackslash{}p\{L\}\textbackslash{}p\{N\}]?[\textbackslash{}p\{Lu\}\textbackslash{}p\{Lt\}\textbackslash{}p\{Lm\}\textbackslash{}p\{Lo\}\textbackslash{}p\{M\}]*[\textbackslash{}p\{Ll\}\textbackslash{}p\{Lm\}\textbackslash{}\ldots}}\newline{\scriptsize\itshape e.g.\ \seqsplit{nvidia/Nemotron-3-Nano-Omni-30B-A3B-Reasoning-NVFP4}} \\
\bottomrule
\end{tabular}

\caption{The most common distinct pre-tokenization patterns in the census,
ranked by number of repositories. Patterns are truncated for width; the full
text of each, along with an example repository, is in
\code{table\_census\_pat.csv}.}
\label{tab:censuspat}
\end{table}

Table~\ref{tab:patterns} applies the same classifier to the thirteen released
tokenizers of \S\ref{sec:cross}, for which a measured Nepali fertility is
available for each. The table checks that the classifier's labels correspond to
behaviour and not merely to syntax: the letters-only group occupies the top of
the table, and nothing below it is letters-only.

\begin{table*}[h]
\centering
\small
\setlength{\tabcolsep}{4pt}
\begin{tabular}{llrr}
\toprule
Tokenizer & Pre-tokenization word class & $|V|$ & Nepali tok/word \\
\midrule
Llama-3 & letters-only & 128,256 & 3.76 \\
Control (\textbackslash{}p\{L\}+) & letters-only & 65,536 & 4.78 \\
Qwen2.5 & letters-only & 151,665 & 6.57 \\
cl100k (GPT-3.5/4) & letters-only & 100,277 & 6.98 \\
GPT-2 & letters-only & 50,257 & 10.97 \\
\addlinespace
Treatment ([\textbackslash{}p\{L\}\textbackslash{}p\{M\}]+) & mark-aware & 65,536 & 1.58 \\
\sysname{} & mark-aware & 65,536 & 1.69 \\
o200k (GPT-4o) & mark-aware & 200,019 & 2.32 \\
Mistral NeMo (Tekken) & mark-aware & 131,072 & 3.17 \\
DeepSeek-V3 & mark-aware & 128,815 & 4.29 \\
\addlinespace
IndicBERTv2 & not-letter-based & 250,000 & 1.58 \\
BLOOM & not-letter-based & 250,680 & 1.72 \\
NLLB-200 & not-letter-based & 256,204 & 1.92 \\
mT5 & not-letter-based & 250,100 & 2.64 \\
Sarvam-1 & not-letter-based & 68,096 & 2.66 \\
Gemma-2 & not-letter-based & 256,000 & 3.13 \\
\bottomrule
\end{tabular}

\caption{Word class and measured Nepali fertility for each tokenizer in
Table~\ref{tab:main}, sorted by class and then by fertility.}
\label{tab:patterns}
\end{table*}

Two classes of repository are excluded from each denominator in
Table~\ref{tab:census}: those that ship no tokenizer of any kind
(\CensusExcludedNoTok{} of the overall top-\CensusLimit{} listing, comprising
vision, audio, and diffusion models, plus quantised redistributions that carry
weights only), and gated repositories alongside the handful whose
\code{pre\_tokenizer} did not fit in an 8\,MB range request. Neither is folded
into the affected or the unaffected side, because doing so would shift the
headline number with no evidence behind it.

What is deliberately \emph{not} excluded is the group that ships a tokenizer in
some older or non-\code{tokenizers} format. Excluding those would bias the
result twice over and in opposite directions: SentencePiece repositories are
definitively unaffected, so dropping them raises the letters-only share, while
repositories carrying \code{vocab.json} and \code{merges.txt} without a
\code{tokenizer.json} are definitively affected, since that pair is GPT-2's
original serialisation and \code{transformers} loads it through \code{ByteLevel}
with the regex enabled, so dropping them lowers it. Both are classified from
their file layout, which resolves the question without downloading anything.

\section{Predicted against measured, across five languages}
\label{app:multipair}

Figure~\ref{fig:multipair} plots each arm's own training-free bound
(\S\ref{sec:bound}) against the fertility that arm actually attains, for all
\MpN{} matched pairs of \S\ref{sec:pair} and both arms of each.
Proposition~\ref{prop:bound} forces every one of the ten points onto or above the
dashed line and says nothing further. What the theorem does not force is the
difference between the two series: the control arm lies on the line across the
entire range, while the treatment arm lies well above its own much lower floor.
That difference is the practical content of the bound. Where the letters-only
class is in force, the pre-tokenizer determines the outcome and corpus-side
interventions cannot reach it; where it is not, the floor is slack and the
vocabulary budget becomes the variable a designer can trade against.

\begin{figure}[h]
\centering
\resizebox{\linewidth}{!}{\begin{tikzpicture}
\begin{axis}[width=0.84\linewidth, height=6.4cm,
  xlabel={Pre-tokenization bound for that arm (tokens/word)},
  ylabel={Measured fertility (tokens/word)},
  xmin=0.89, xmax=9.21, ymin=0.89, ymax=8.08,
  grid=major, legend pos=north west, legend cell align=left]
\addplot[dashed, gray, domain=0.89:8.08, samples=2] {x};
\addlegendentry{$y=x$: the bound}
\addplot[only marks, mark=*] coordinates {(3.2993,3.3597) (4.6454,4.7295) (4.6726,4.7753) (6.7556,6.8482) (7.3404,7.4831)};
\addlegendentry{control \pL{}}
\addplot[only marks, mark=square*] coordinates {(1.1171,1.3216) (1.1519,1.4955) (1.1415,1.5786) (1.1943,1.7745) (1.2608,2.1116)};
\addlegendentry{treatment \pLM{}}
\node[anchor=north west, font=\scriptsize] at (axis cs:3.2993,3.3597) {\,Hindi};
\node[anchor=north west, font=\scriptsize] at (axis cs:4.6454,4.7295) {\,Bengali};
\node[anchor=north west, font=\scriptsize] at (axis cs:4.6726,4.7753) {\,Nepali};
\node[anchor=north west, font=\scriptsize] at (axis cs:6.7556,6.8482) {\,Tamil};
\node[anchor=north west, font=\scriptsize] at (axis cs:7.3404,7.4831) {\,Malayalam};
\end{axis}
\end{tikzpicture}}
\caption{Each arm's pre-tokenization bound against the fertility it reaches on
FLORES-200 devtest, two points per matched pair. The dashed line is $y=x$, the
bound. Proposition~\ref{prop:bound} guarantees no point falls below it. Control
points lie on the line; treatment points lie well above their own floor.}
\label{fig:multipair}
\end{figure}

\end{document}